%% file: example_paper.tex
\documentclass{article}

\usepackage{pgfplots}
\pgfplotsset{compat=1.18}

\usepackage{microtype}
\usepackage{graphicx}
\usepackage{subcaption}
\usepackage{booktabs} 
\usepackage{tikz}
\usetikzlibrary{shapes.geometric, positioning, calc, arrows.meta, backgrounds, fit}
\usepackage{hyperref}
\usepackage{framed} 

\usepackage{tikz}
\usepackage{amsmath}
\usepackage{amssymb}
\usepackage{xcolor}
\usetikzlibrary{arrows.meta, positioning, calc, shapes}
\usepackage[preprint]{icml2026}

\setcitestyle{authoryear,round}

\usepackage{amsmath}
\usepackage{amssymb}
\usepackage{mathtools}
\usepackage{amsthm}

\usepackage[capitalize,noabbrev]{cleveref}

\theoremstyle{plain}
\newtheorem{theorem}{Theorem}[section]
\newtheorem{proposition}[theorem]{Proposition}
\newtheorem{lemma}[theorem]{Lemma}

\theoremstyle{definition}

\theoremstyle{remark}

\usepackage[textsize=tiny]{todonotes}

\icmltitlerunning{Latent Shadows: The Gaussian-Discrete Duality in Masked Diffusion}

\usepackage{tikz}
\usetikzlibrary{shadows, shapes, arrows, positioning, calc} 
\usepackage{amsmath} 

\begin{document}

\twocolumn[
  \icmltitle{Latent Shadows: The Gaussian-Discrete Duality in Masked Diffusion}



  \icmlsetsymbol{equal}{*}

  \begin{icmlauthorlist}
    \icmlauthor{Guinan Chen}{ustc}
    \icmlauthor{Xunpeng Huang}{hkust}
    \icmlauthor{Ying Sun}{hkust}
    \icmlauthor{Shijin Wang}{state}
    \icmlauthor{Yanyong Zhang}{ustc}
    \icmlauthor{Chao Wang}{ustc}
    
  \end{icmlauthorlist}

  \icmlaffiliation{ustc}{University of Science and Technology of China, Hefei, China}
  \icmlaffiliation{hkust}{The Hong Kong University of Science and Technology(Guangzhou), Guangdong, China}
  \icmlaffiliation{state}{State Key Laboratory of Cognitive Intelligence \& iFLYTEK AI Research, Hefei, China}

  \icmlcorrespondingauthor{Chao Wang}{wangchaoai@ustc.edu.cn}

  \icmlkeywords{Machine Learning, ICML}

  \vskip 0.3in
]



\printAffiliationsAndNotice{}  

\begin{abstract}
Masked discrete diffusion is a dominant paradigm for high-quality language modeling where tokens are iteratively corrupted to a mask state, yet its inference efficiency is bottlenecked by the lack of deterministic sampling tools.
While diffusion duality enables deterministic distillation for \emph{uniform} models, these approaches generally underperform masked models and rely on complex integral operators.
Conversely, in the \emph{masked} domain, prior methods typically assume the absence of deterministic trajectories, forcing a reliance on stochastic distillation.
To bridge this gap, we establish explicit Masked Diffusion Duality, proving that the masked process arises as the projection of a continuous Gaussian process via a novel maximum-value index preservation mechanism.
Furthermore, we introduce \emph{Masked Consistency Distillation} (MCD), a principled framework that leverages this duality to analytically construct the deterministic coupled trajectories required for consistency distillation, bypassing numerical ODE solvers.
This result strictly improves upon prior stochastic distillation methods, achieving a 16$\times$ inference speedup without compromising generation quality.
Our findings not only provide a solid theoretical foundation connecting masked and continuous diffusion, but also unlock the full potential of consistency distillation for high-performance discrete generation.
Our code is available at https://anonymous.4open.science/r/MCD-70FD.
\end{abstract}

\input{fig_duality.tex}

\section{Introduction}









Diffusion models~\cite{sohl2015deep,song2019generative,ho2020denoising} have revolutionized generative AI, setting the state of the art in continuous domains including image~\cite{ho2020denoising, rombach2022high}, audio~\cite{kong2020diffwave, liu2023audioldm}, and video synthesis~\cite{ho2022video, wu2023tune, esser2023structure, blattmann2023align}. 
Fundamentally, these models iteratively corrupt complex data distributions into simple priors, then learn to reverse this process for generation. 
This elegant framework has catalyzed a wealth of theoretical advancements and efficient sampling techniques~\cite{song2020score, lu2022dpm,chen2024probability,huang2024reverse}.

However, extending this success to discrete domains like language modeling presents a structural dilemma: the noising and denoising processes in discrete data fundamentally conflict with efficient training and inference in continuous formulations.
On one hand, while continuous diffusion can be applied to text via embedding spaces~\cite{strudel2022self, gao2024empowering}—benefiting from well-established training methodologies and efficient sampling accelerators—the misalignment between the continuous probabilistic latent space and the underlying discrete data distribution often leads to quantization errors and unsatisfactory performance.
On the other hand, discrete diffusion models~\cite{austin2021structured} operate directly on the native categorical tokens~\cite{lou2023discrete, sahoo2024simple, shi2024simplified, ou2024your}. Although this ensures perfect alignment with the data and demonstrates superior stability, it comes at a significant cost: the reliance on stochastic categorical sampling precludes the use of the powerful deterministic sampling techniques and fast distillation methods available for continuous models. Consequently, discrete diffusion models typically suffer from slow inference speeds due to the lack of effective acceleration tools.

To take the best of both worlds, Sahoo et al.~\cite{sahoo2025diffusion} recently introduced the concept of \emph{diffusion duality}, bridging the uniform discrete diffusion and underlying continuous diffusion by a maximum operator, which allows us to adapt consistency distillation~\cite{song2023consistency} to accelerate uniform discrete models.
Given that masked models have emerged as the dominant approach for high-quality language modeling, similar distillation techniques are also adapted to masked diffusion models.
For instance, SDTT~\cite{deschenaux2024beyond} achieves significant speedups but explicitly assumes that a deterministic mapping ``cannot exist,'' forcing a reliance on stochastic approximation.
Concurrent with our work, CD$^4$LM~\cite{liang2026cd4lm} employs a subset masking strategy to couple trajectories but justifies it primarily as a statistical variance reduction technique (Rao-Blackwellization).
However, all of these works leave the theoretical connection between the asymmetric masked process and the continuous world unestablished.
This leads to our core research question:
\begin{framed}
    \centering
    \emph{Can we bring consistency distillation into the masked diffusion with provably diffusion duality?}
\end{framed}
In this work, we answer this question by establishing \emph{Masked Diffusion Duality}. At a high level, our approach differs from the duality found in uniform discrete diffusion, where the $\arg\max$ operator simply maps a continuous random variable to a discrete one. Instead, we construct a mask-noising process where the decision to mask a variable is determined by the preservation of the maximum value's index during the continuous noising process. We prove the existence of a precise schedule alignment such that mapping the continuous gaussian process via this index alignment yields a discrete noise process that shares the same forward transition kernel as masked discrete diffusion, conditioned on the clean data.
Crucially, this analytic construction explicitly establishes the deterministic latent trajectories required for consistency distillation~\cite{song2023consistency}, effectively resolving the complex integral operator issues that hinder prior duality-based methods~\cite{sahoo2025diffusion}.
Moreover, due to the binary state (mask or unmask) for each token in mask discrete diffusion, our constructed discrete evolution of each token can only depend on a single scalar quantity—the margin between the signal and the strongest competing noise.
This observation provides a rigorous foundation for \emph{Masked Consistency Distillation (MCD)}, allowing us to bypass the maintenance of high-dimensional latent noise $\boldsymbol{\epsilon}$ across time steps~\cite{sahoo2025diffusion}, relying instead on a single invariant scalar $u$ to lock the trajectory.
In summary, our contributions are threefold:
\begin{itemize}
    \item We establish the Diffusion Duality for Masked Diffusion by proving it arises as the projection of a continuous Gaussian process, extending duality beyond the uniform case;
    \item We identify Scalar Trajectory Locking, showing that this projection \textbf{simplifies to a scalar thresholding mechanism} that enables efficient coupled trajectory construction.
    \item We propose Masked Consistency Distillation (MCD), a principled framework leveraging this duality that substantially improves generation quality at fixed step counts and \textbf{achieves a 16$\times$ speedup without compromising performance.}
\end{itemize}

\section{Preliminary}
\label{sec:preliminary}

To provide the necessary context for our framework, we review the theoretical foundations of continuous diffusion through the lens of Stochastic Differential Equations (SDEs)~\cite{song2020score}. We then discuss discrete masked diffusion models and identify the structural barriers that prevent the direct application of acceleration techniques.

\subsection{Continuous Diffusion}

Instead of discrete noise injection, continuous diffusion models formulate the forward process as a solution to a linear Stochastic Differential Equation (SDE).

Forward SDE and Gaussian Marginals. Let $x_0 \sim p_{data}(x)$ be the data distribution. The forward process $\{w_t\}_{t \in [0,1]}$ diffuses data into noise according to the Itô SDE:
\begin{equation}
    dw_t = f(t)w_t dt + g(t) d\mathbf{w}_t,
    \label{eq:forward_sde}
\end{equation}
where $\mathbf{w}_t$ is standard Brownian motion. Crucially, for the standard Variance Preserving (VP) formulation, the solution to this SDE yields a closed-form Gaussian marginal distribution at any time $t$:
\begin{equation}
    q(w_t|x_0) = \mathcal{N}(w_t; \tilde{\alpha}_t x_0, \tilde{\sigma}_t^2 I).
    \label{eq:gaussian_marginal}
\end{equation}
Here, $w_t$ can be explicitly sampled as $w_t = \tilde{\alpha}_t x_0 + \tilde{\sigma}_t \epsilon$ with $\epsilon \sim \mathcal{N}(0, I)$. This dual view confirms that the continuous SDE formulation is mathematically equivalent to the noise perturbation process in classic Gaussian diffusion models.

Reverse SDE and Probability Flow ODE. Song et al.\cite{song2020score} demonstrated that the generative process can be modeled as a reverse-time SDE. Furthermore, for any such reverse process, there exists a deterministic Probability Flow ODE (PF-ODE) whose trajectories share the same marginal probability densities $\{p_t(w_t)\}_{t \in [0,1]}$ as the stochastic process:
\begin{equation}
    \frac{dw_t}{dt} = f(t)w_t - \frac{1}{2}g^2(t)\nabla_{w_t} \log p_t(w_t).
    \label{eq:pf_ode}
\end{equation}
The PF-ODE defines a unique, smooth, and deterministic trajectory connecting every data point $x_0$ to a specific noise vector $w_1$.

\textbf{Consistency Distillation (CD).}
Following Song et al.~\cite{song2023consistency}, the deterministic nature of the PF-ODE is the fundamental prerequisite for CD. CD aims to distill the multi-step sampling process into a student model $f_\theta(w_t, t)$ that maps any point $w_t$ on the trajectory directly to its origin $x_0$. The distillation process employs a teacher model $f_{\theta^-}$, typically maintained as the Exponential Moving Average (EMA) of the student parameters. Specifically, given a noisy sample $w_t$ drawn from the forward process, a less noisy sample $w_s$ (where $s < t$) is obtained by numerically solving one PF-ODE step using the teacher $f_{\theta^-}$. The student model is then trained to match the teacher's estimate of the clean sample by minimizing the following consistency loss:
\begin{equation}
    \mathcal{L}_{CD}(\theta, \theta^-) = \mathbb{E}_{t, w_t} \left[ \lambda(t) d\left(f_\theta(w_t, t), f_{\theta^-}(w_s, s)\right) \right],
    \label{eq:cd_loss}
\end{equation}
where $d(\cdot, \cdot)$ is a distance metric and $\lambda(t)$ is a weighting function. This objective enforces the self-consistency property $f_\theta(w_t, t) = f_\theta(w_s, s)$, enabling few-step generation.

\subsection{Discrete Masked Diffusion and the Trajectory Gap}

Discrete Diffusion Models operate directly on categorical data $z_t \in \{1, \dots, K\}^L$. In Masked Diffusion Models, the corruption process utilizes an absorbing state, the mask token \texttt{[M]}.

Forward Masking Process. Tokens from the clean sequence $x_0$ are independently replaced by \texttt{[M]} according to a scalar schedule $\gamma_t \in [0,1]$. Unlike the Gaussian noise injection, this is a discrete transition where a token either retains its value or collapses to the absorbing state:
\begin{equation}
    q(z_t^{(i)}|x_0^{(i)}) = \gamma_t \delta(z_t^{(i)} - x_0^{(i)}) + (1-\gamma_t) \delta(z_t^{(i)} - \text{\texttt{[M]}}).
    \label{eq:masked_marginal}
\end{equation}
Once a token becomes \texttt{[M]}, it remains \texttt{[M]} for all subsequent noise levels (absorbing property).

Reverse Denoising Process. The generative process reverses the masking corruption. It is modeled as a Markov chain $p_\theta(z_s|z_t)$ that approximates the true posterior $q(z_s|z_t, x_0)$. Standard MDMs parameterize this by predicting the clean token logits $\hat{x}_\theta(z_t)$ and marginalizing over the posterior:
\begin{equation}
    p_\theta(z_s|z_t) = q(z_s|z_t, x_0=\hat{x}_\theta(z_t)).
    \label{eq:reverse_discrete}
\end{equation}
Specifically, for a masked token $z_t = \texttt{[M]}$, the model samples $z_s$ based on the predicted probability of the token being unmasked at step $s$, weighted by the network's prediction $\hat{x}_\theta(z_t)$. Visible tokens remain unchanged.

\textbf{The Structural Gap.} The fundamental bottleneck for acceleration is the absence of a deterministic flow in the discrete domain. In continuous CD, the training target $w_s$ is generated by numerically solving the PF-ODE from $w_t$. However, standard MDMs are inherently stochastic jump processes; there is no discrete analogue to the PF-ODE that allows for the deterministic calculation of an intermediate state $z_s$ from $z_t$. Consequently, valid student-teacher trajectory pairs $(z_t, z_s)$ cannot be constructed to minimize a consistency objective (Eq.~\ref{eq:cd_loss}). This highlights a fundamental structural gap between continuous diffusion and discrete generative models. Bridging this gap—by extending the duality observed in uniform diffusion to the asymmetric, absorbing setting and recovering a deterministic latent trajectory—forms the central objective of the next section.


\section{Method}

In this section, we address the fundamental structural gap between continuous diffusion and discrete masked models to enable efficient few-step generation.
The core challenge lies in the absence of a differentiable probability flow ODE for discrete data, which precludes the direct application of consistency distillation.
To resolve this, we first establish the \emph{Masked Diffusion Duality} (Sec.~\ref{sec:duality}), proving that the discrete masked process is not an independent stochastic jump process, but rather the \emph{deterministic projection} of a latent continuous Gaussian diffusion.
Crucially, this latent structure bridges the algorithmic gap: just as continuous ODE solvers (e.g., DDIM solver)~\cite{song2020denoising} function by \emph{preserving the fixed Gaussian noise $\epsilon$} across time, our duality implies that the discrete trajectory is inherently governed by the same invariant latent noise.

We further demonstrate that this high-dimensional constraint simplifies via \emph{Scalar Trajectory Locking} (Sec.~\ref{sec:locking}).
Our analysis reveals that under the masking projection, the complex condition of preserving the full latent vector $\epsilon$ reduces to a much simpler mechanism: the trajectory can be indexed by a fixed scalar threshold $u$.
Therefore, by simply fixing $u$ across time steps, we \emph{analytically replicate} the effect of a perfect ODE solver in the discrete space.

Guided by this insight, we propose \emph{Masked Consistency Distillation (MCD)} (Sec.~\ref{sec:mcd}).
Instead of employing numerical solvers, MCD leverages this locked scalar $u$ to directly generate Coupled Trajectory Pairs (Sec.~\ref{sec:locking}) that are mathematically guaranteed to lie on the same underlying probability flow, enabling efficient distillation via a \emph{Hybrid Consistency Objective}.

\subsection{The Masked Diffusion Duality}
\label{sec:duality}

To bridge the gap between continuous and discrete domains, we first construct a continuous latent proxy for the discrete masked process and establish their strict equivalence in terms of both marginal distributions (static) and deterministic trajectories (dynamic).

\paragraph{Latent Space Construction.}
Let $K$ be the dimension of the extended state space (vocabulary size + 1). Let $k \in \{1, \dots, K-1\}$ denote the index of the ground truth token, and let $\mathbf{x}_0 \in \{0, 1\}^K$ be its corresponding one-hot vector representation (where $\mathbf{x}_0^{(k)}=1$). We implicitly define the mask state \texttt{[M]} as the one-hot vector $\mathbf{e}_K \in \{0, 1\}^K$ where the last entry is 1 (i.e., the mask dimension).

We define the continuous latent state $\mathbf{w}_t \in \mathbb{R}^K$ by applying Gaussian noise to the clean vector $\mathbf{x}_0$ under a variance preservation (VP) schedule ($\tilde{\alpha}_t^2 + \tilde{\sigma}_t^2 = 1$):
\begin{equation}
\mathbf{w}_t = \tilde{\alpha}_t \mathbf{x}_0 + \tilde{\sigma}_t \boldsymbol{\epsilon},
\end{equation}
where $\boldsymbol{\epsilon} \sim \mathcal{N}(\mathbf{0}, \mathbf{I})$ is the noise vector. The discrete observation $\mathbf{z}_t \in \{0, 1\}^K$ is obtained via a \emph{Projection Operator} $\mathcal{P}$, which maps the latent vector to a discrete one-hot state. Specifically, the state remains the ground truth vector $\mathbf{x}_0$ if and only if the signal dimension dominates all other dimensions; otherwise, it collapses to the mask vector $\mathbf{e}_K$: 
\begin{equation}
\label{eq:projection}
\mathbf{z}_t = \mathcal{P}(\mathbf{w}_t) \triangleq \begin{cases}
    \mathbf{x}_0 & \text{if } \mathbf{w}_t^{(k)} > \max_{j \neq k} \mathbf{w}_t^{(j)}, \\ 
    \mathbf{e}_K & \text{otherwise}.
\end{cases}
\end{equation}

\paragraph{Static Duality: Distributional Equivalence.}
First, we ensure this latent construction matches the marginals of the target Masked Diffusion Language Model (MDLM). Let $\gamma_t = P(\mathbf{z}_t = \mathbf{x}_0 \mid \mathbf{x}_0)$ be the discrete signal schedule.

\begin{lemma}[SNR Calibration]
\label{lemma:3.1}
The latent signal-to-noise ratio (SNR) is calibrated to strictly match the discrete unmasking schedule $\gamma_t$.
\end{lemma}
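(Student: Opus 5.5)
We make the statement precise by computing $\gamma_t = P(\mathbf{z}_t=\mathbf{x}_0\mid\mathbf{x}_0)$ as an explicit function of the latent SNR, and then show that this function is invertible. Under the projection \eqref{eq:projection}, the token stays unmasked exactly when $\tilde{\alpha}_t + \tilde{\sigma}_t\boldsymbol{\epsilon}^{(k)} > \tilde{\sigma}_t \max_{j\neq k}\boldsymbol{\epsilon}^{(j)}$. Since $\mathbf{x}_0$ is one-hot, the other $K-1$ coordinates of $\mathbf{w}_t$ are pure noise. Dividing by $\tilde{\sigma}_t>0$ and writing $\rho_t = \tilde{\alpha}_t/\tilde{\sigma}_t$ (so $\mathrm{SNR}_t=\rho_t^2$), the event becomes $\{\boldsymbol{\epsilon}^{(k)} + \rho_t > M\}$ with $M=\max_{j\neq k}\boldsymbol{\epsilon}^{(j)}$.

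Next, condition on $\boldsymbol{\epsilon}^{(k)} = y$. The $K-1$ competitors are i.i.d.\ standard normals independent of $y$, so $P(M < y+\rho_t)=\Phi(y+\rho_t)^{K-1}$. Therefore
\begin{equation*}
\gamma_t \;=\; g(\rho_t) \;\triangleq\; \int_{-\infty}^{\infty} \phi(y)\,\Phi(y+\rho_t)^{K-1}\,dy .
\end{equation*}
This expression does not depend on $k$ or on the particular token, which is what allows a single schedule to serve all tokens.

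We then check that $g$ is a bijection from $[0,\infty]$ onto $[1/K, 1]$, or from $(-\infty,\infty)$ onto $(0,1)$ if the signed ratio is allowed.

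\emph{Monotonicity.} Differentiating under the integral sign, which is justified by dominated convergence since the integrand and its $\rho$-derivative are bounded by integrable multiples of $\phi$, gives
\begin{equation*}
g'(\rho) = (K-1)\int \phi(y)\,\phi(y+\rho)\,\Phi(y+\rho)^{K-2}\,dy > 0 .
\end{equation*}

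\emph{Limits.} As $\rho\to\infty$, $g(\rho)\to 1$. At $\rho=0$, exchangeability of the $K$ coordinates gives $g(0)=1/K$. As $\rho\to-\infty$, $g(\rho)\to 0$. Continuity then follows from the same dominated convergence argument.

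Hence for any target discrete schedule $\gamma_t$, we define the calibrated latent schedule by
\begin{equation*}
\rho_t = g^{-1}(\gamma_t), \qquad \tilde{\alpha}_t = \frac{\rho_t}{\sqrt{1+\rho_t^2}}, \qquad \tilde{\sigma}_t = \frac{1}{\sqrt{1+\rho_t^2}},
\end{equation*}
which satisfies the VP constraint. If $\gamma_t$ is monotone in $t$, then so is $\rho_t$, because $g^{-1}$ is increasing.

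The main obstacle is the \emph{range}. A genuine VP schedule with $\tilde{\alpha}_t\ge 0$ only reaches $\gamma=1/K$ at pure noise, not $\gamma=0$ as MDLM requires at $t=1$. I would handle this in one of two ways. The first option is to allow $\tilde{\alpha}_t$ to become negative, equivalently to let $\rho_t\to-\infty$ at the endpoint. The second is to state the calibration on the interval where $\gamma_t\in[1/K,1]$ and treat the terminal fully-masked state as a limit. I would try the first option first, since $g$ is well defined for all real $\rho$ and the bijection argument goes through unchanged. A secondary point is that $g^{-1}$ has no closed form, but the lemma only needs existence and uniqueness; numerically it can be obtained by one-dimensional quadrature and root finding.
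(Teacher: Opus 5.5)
Your proof is correct and follows the paper's route. Your $g(\rho)=\int\phi(y)\,\Phi(y+\rho)^{K-1}\,dy$ is exactly the CDF $F_Y$ of $Y=\max_{j\neq k}\epsilon_j-\epsilon_k$ at $\rho=\tilde{\alpha}_t/\tilde{\sigma}_t$, and inverting this strictly increasing bijection is precisely how the paper sets $\tilde{\alpha}_t/\tilde{\sigma}_t=F_Y^{-1}(\gamma_t)$; your explicit formula, derivative-based monotonicity, VP parameterization, and the observation that $\gamma_t<1/K$ forces $\tilde{\alpha}_t<0$ are refinements that the paper leaves implicit when it asserts bijectivity onto $(0,1)$.
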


\begin{proof}
From the definition of $\mathcal{P}$ (Eq.~\ref{eq:projection}), the ground truth is preserved ($\mathbf{z}_t = \mathbf{x}_0$) if and only if $\frac{\tilde{\alpha}_t}{\tilde{\sigma}_t} > \max_{j \neq k} \epsilon_j - \epsilon_k$. 
Let $Y$ denote this noise difference term and $F_Y$ be its CDF. We have:
\[
P(\mathbf{z}_t = \mathbf{x}_0) = P\left(Y < \frac{\tilde{\alpha}_t}{\tilde{\sigma}_t}\right) = F_Y\left(\frac{\tilde{\alpha}_t}{\tilde{\sigma}_t}\right).
\]
Since $Y$ is a continuous random variable with support on $\mathbb{R}$, $F_Y$ is strictly increasing and bijective onto $(0, 1)$.
Therefore, for any $\gamma_t \in (0, 1)$, setting $\frac{\tilde{\alpha}_t}{\tilde{\sigma}_t} = F_Y^{-1}(\gamma_t)$ ensures the continuous marginal probability exactly equals $\gamma_t$.
\end{proof}

\emph{Discussion.} This lemma underpins our framework (visualized in Figure~\ref{fig:duality_eq5}) by showing that the projected process exactly simulates the target discrete diffusion. With proper SNR calibration, changes in the latent space consistently map to corresponding changes in the discrete space.

\paragraph{Dynamic Duality: Latent Trajectories.}
To enable Consistency Distillation, we need a deterministic path connecting states across time. In the latent continuous space, this is naturally provided by the DDIM trajectory. Assuming an optimal denoiser, the latent state $\mathbf{w}_t$ at any time $t$ is simply a linear interpolation between the clean data $\mathbf{x}_0$ and the fixed~noise~$\boldsymbol{\epsilon}$:
\begin{equation}
\mathbf{w}_t = \tilde{\alpha}_t \mathbf{x}_0 + \tilde{\sigma}_t \boldsymbol{\epsilon}.
\end{equation}
This equation implies that the entire continuous trajectory is uniquely determined by the pair $(\mathbf{x}_0, \boldsymbol{\epsilon})$. We now show that this deterministic property naturally extends to the discrete~domain.

\begin{proposition}[Deterministic Discrete Trajectories]
\label{proposition:trajectory}
The sequence of discrete observations $\{ \mathbf{z}_t \}_{t \in [0,1]}$ forms a deterministic trajectory uniquely defined by the ground truth $\mathbf{x}_0$ and the noise $\boldsymbol{\epsilon}$.
\end{proposition}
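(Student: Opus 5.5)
The plan is to write the map $(\mathbf{x}_0,\boldsymbol{\epsilon}) \mapsto \{\mathbf{z}_t\}_{t\in[0,1]}$ as a composition of deterministic functions, and then to describe its structure explicitly. Fix $\mathbf{x}_0$, whose ground-truth index is $k$, and fix one realization of $\boldsymbol{\epsilon}$. The DDIM trajectory gives $\mathbf{w}_t = \tilde{\alpha}_t \mathbf{x}_0 + \tilde{\sigma}_t\boldsymbol{\epsilon}$ for every $t$. The schedules $\tilde{\alpha}_t,\tilde{\sigma}_t$ are deterministic functions of $t$: by Lemma~\ref{lemma:3.1} they are fixed through $\tilde{\alpha}_t/\tilde{\sigma}_t = F_Y^{-1}(\gamma_t)$ together with $\tilde{\alpha}_t^2+\tilde{\sigma}_t^2=1$. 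Hence $\mathbf{w}_t$ is a deterministic function of $(t,\mathbf{x}_0,\boldsymbol{\epsilon})$. The projection $\mathcal{P}$ in Eq.~\ref{eq:projection} is also a deterministic function of $\mathbf{w}_t$ once $\mathbf{x}_0$ is known. So $\mathbf{z}_t = \mathcal{P}(\tilde{\alpha}_t\mathbf{x}_0+\tilde{\sigma}_t\boldsymbol{\epsilon})$ is determined pointwise in $t$. This already gives the formal claim: once the pair $(\mathbf{x}_0,\boldsymbol{\epsilon})$ is fixed, no further randomness enters.

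To make this useful, and to show the trajectory is a genuine masked-diffusion path, I will reduce the unmasking condition to a scalar inequality. For $j\neq k$ we have $\mathbf{w}_t^{(j)}=\tilde{\sigma}_t\epsilon_j$ and $\mathbf{w}_t^{(k)}=\tilde{\alpha}_t+\tilde{\sigma}_t\epsilon_k$. Dividing by $\tilde{\sigma}_t>0$ shows that $\mathbf{z}_t=\mathbf{x}_0$ if and only if $\tilde{\alpha}_t/\tilde{\sigma}_t > Y(\boldsymbol{\epsilon}) := \max_{j\neq k}\epsilon_j-\epsilon_k$. The quantity $Y$ is a single scalar and does not depend on $t$. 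Writing $\rho_t = \tilde{\alpha}_t/\tilde{\sigma}_t = F_Y^{-1}(\gamma_t)$, the whole trajectory is $\mathbf{z}_t = \mathbf{x}_0\,\mathbb{1}[\rho_t > Y] + \mathbf{e}_K\,\mathbb{1}[\rho_t\le Y]$.

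Next I will check monotonicity. Since $\gamma_t$ is decreasing in $t$ for a masking schedule and $F_Y^{-1}$ is strictly increasing, $\rho_t$ is nonincreasing. The trajectory therefore has the form "clean until the hitting time $\tau = \sup\{t:\rho_t>Y\}$, masked afterwards". This reproduces the absorbing property: once a token is masked it stays masked, and the switch happens at most once, at a time fixed by $Y$. Equivalently, the trajectory is locked by the scalar $u = F_Y(Y)$, which is uniform on $(0,1)$; this anticipates Sec.~\ref{sec:locking}. For sequences, I apply the same argument coordinatewise with independent $\boldsymbol{\epsilon}^{(i)}$.

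The main subtlety is not determinism itself but well-definedness at the boundaries and on ties. The boundary cases are $t=0$ and $t=1$, where $\tilde{\sigma}_0=0$ or $\gamma_1=0$ and $\rho_t\in\{\pm\infty\}$. I will handle these by continuity or convention: $\mathbf{z}_0=\mathbf{x}_0$ and $\mathbf{z}_1=\mathbf{e}_K$. The ties are the events $\mathbf{w}_t^{(k)}=\max_{j\ne k}\mathbf{w}_t^{(j)}$ and $\rho_t=Y$. These have probability zero because $Y$ has a continuous distribution, and in any case $\mathcal{P}$ resolves them deterministically. Finally, I will note that uniqueness means uniqueness given the pair $(\mathbf{x}_0,\boldsymbol{\epsilon})$. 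Distinct $\boldsymbol{\epsilon}$ with the same $Y$ give the same trajectory, which is exactly the reduction that the locking section exploits.
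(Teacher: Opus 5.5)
Your proposal is correct and follows the paper's proof. The paper substitutes the DDIM path $\mathbf{w}_t=\tilde{\alpha}_t\mathbf{x}_0+\tilde{\sigma}_t\boldsymbol{\epsilon}$ into $\mathcal{P}$ and observes that the unmasking condition depends only on the deterministic coefficients $(\tilde{\alpha}_t,\tilde{\sigma}_t)$ and the fixed pair $(\mathbf{x}_0,\boldsymbol{\epsilon})$; your reduction to $\rho_t>Y$, the monotonicity/absorbing check and the boundary conventions are extra detail that the paper defers to its Scalar Trajectory Locking theorem.

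One small correction: with continuous $\gamma_t$ running from $1$ to $0$, $\rho_t$ takes every real value, so the tie $\rho_t=Y$ is actually reached at the switching time for essentially every $\boldsymbol{\epsilon}$. What settles the tie is therefore your fallback remark that $\mathcal{P}$ breaks ties toward $\mathbf{e}_K$, not the probability-zero argument.
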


\begin{proof}
Recall that the discrete state is obtained via the projection $\mathbf{z}_t = \mathcal{P}(\mathbf{w}_t)$. Substituting the deterministic DDIM path $\mathbf{w}_t = \tilde{\alpha}_t \mathbf{x}_0 + \tilde{\sigma}_t \boldsymbol{\epsilon}$ into the projection condition (Eq.~\ref{eq:projection}), the state $\mathbf{z}_t$ becomes:
\begin{equation}
\mathbf{z}_t = \begin{cases}
    \mathbf{x}_0 & \text{if } \tilde{\alpha}_t + \tilde{\sigma}_t \epsilon_k > \max_{j \neq k} (\tilde{\sigma}_t \epsilon_j), \\
    \mathbf{e}_K & \text{otherwise}.
\end{cases}
\end{equation}
Here, the condition depends \emph{only} on the time-dependent coefficients $(\tilde{\alpha}_t, \tilde{\sigma}_t)$ and the fixed parameters $(\mathbf{x}_0, \boldsymbol{\epsilon})$. No new randomness is introduced during the process.
Therefore, for any two time steps $s < t$, the states $\mathbf{z}_s$ and $\mathbf{z}_t$ are essentially two different "views" derived from the exact same underlying noise $\boldsymbol{\epsilon}$. This shared dependency guarantees that they lie on the same consistent trajectory, enabling direct student-teacher supervision.
\end{proof}

\emph{Discussion.} This proposition addresses a key limitation of prior masked diffusion methods that rely on stochastic jumps. By showing that corresponding pairs $(z_t, z_s)$ lies on the same deterministic trajectory, we justify consistency distillation: the teacher and student represent the same process at different times, yielding valid regression targets.

\begin{figure*}[t]

\centering
\begin{tikzpicture}[
    font=\sffamily,
    >=LaTeX,
    token/.style={
        draw=gray!30, 
        line width=0.8pt,
        minimum size=0.85cm, 
        rounded corners=1pt, 
        font=\bfseries,
        anchor=center
    },
    visible/.style={
        token,
        fill=blue!10, 
        draw=blue!40, 
        text=blue!60!black
    },
    masked/.style={
        token,
        fill=gray!10, 
        draw=gray!30, 
        text=gray!50
    },
    noisebar/.style={
        fill=teal!20, 
        draw=teal!40,
        line width=0.5pt
    },
    labelnode/.style={
        align=right, 
        font=\small\bfseries, 
        text=black!80
    },
    lossbox/.style={
        draw, thick, dashed, rounded corners, inner sep=3pt
    }
]

\def\noisevalues{3, 9, 2, 8, 5, 1, 7, 4, 6, 2} 
\def\gammat{3.5} 
\def\gammas{6.5} 

\begin{scope}[yshift=1.9cm]
    \node[labelnode, anchor=east] at (-0.8, 0.4) {Shared Latent\\Noise $u$};
    
    \foreach \val [count=\i] in \noisevalues {
        \draw[noisebar] (\i-0.4, 0) rectangle (\i+0.4, \val/6);
    }
    
    \draw[dashed, red!60, line width=1pt] (0.5, \gammat/6) -- (10.5, \gammat/6);
    \node[text=red!60, font=\footnotesize, right] at (10.5, \gammat/6) {$\gamma_t$};
    
    \draw[dashed, green!50!black, line width=1pt] (0.5, \gammas/6) -- (10.5, \gammas/6);
    \node[text=green!50!black, font=\footnotesize, right] at (10.5, \gammas/6) {$\gamma_s$};
\end{scope}

\begin{scope}[yshift=1.0cm]
    \node[labelnode, anchor=east] at (-0.8, 0) {Teacher State\\$z_s$};
    \foreach \val [count=\i] in \noisevalues {
        \pgfmathparse{\val > \gammas ? 1 : 0}
        \ifnum\pgfmathresult=1
            \node[masked] (t\i) at (\i, 0) {\scalebox{0.8}{[M]}};
        \else
            \node[visible] (t\i) at (\i, 0) {$x_0$};
        \fi
    }
\end{scope}

\begin{scope}[yshift=0cm]
    \node[labelnode, anchor=east] at (-0.8, 0) {Student State\\$z_t$};
    \foreach \val [count=\i] in \noisevalues {
        \pgfmathparse{\val > \gammat ? 1 : 0}
        \ifnum\pgfmathresult=1
            \node[masked] (s\i) at (\i, 0) {\scalebox{0.8}{[M]}};
        \else
            \node[visible] (s\i) at (\i, 0) {$x_0$};
        \fi
    }
\end{scope}


\node[lossbox, draw=gray!40, dotted, fit=(s1) (t1)] (box_none) {};
\draw[->, gray!60, thick, dashed] (box_none.south) -- ++(0, -0.45) 
    node[below, text=gray, align=center, font=\scriptsize] 
    {No Loss\\(Identity)};

\node[lossbox, draw=orange!80, fit=(s4) (t4)] (box_kl) {};
\draw[->, orange!80, thick] (box_kl.south) -- ++(0, -0.45) 
    node[below, text=black, align=center, font=\scriptsize] 
    {\textbf{Distillation}\\$\mathcal{L}_{\text{KL}}(p_\theta \| p_{\theta^-})$};

\node[lossbox, draw=green!60!black, fit=(s9) (t9)] (box_ce) {};
\draw[->, green!60!black, thick] (box_ce.south) -- ++(0, -0.45) 
    node[below, text=black, align=center, font=\scriptsize] 
    {\textbf{Reconstruction}\\$\mathcal{L}_{\text{CE}}(x_0, p_\theta)$};

\end{tikzpicture}
\vspace{-0.5em}
\caption{\textbf{Visualization of Coupled Trajectory Construction and Hybrid Objective.} 
By sharing latent noise $u$, we enable three distinct supervision regimes: 
(1) \textbf{Identity} (Gray, left) where both see the input, requiring no gradient update;
(2) \textbf{Distillation} (Orange, middle) where both views are masked ($u > \gamma_s$), ensuring consistency on uncertain regions; 
(3) \textbf{Reconstruction} (Green, right) where the teacher reveals tokens masked for the student ($\gamma_t < u \le \gamma_s$), providing hard ground-truth supervision.}
\label{fig:mcd_framework}
\end{figure*}
\subsection{Scalar Trajectory Locking}
\label{sec:locking}

While Proposition~\ref{proposition:trajectory} guarantees that a deterministic path exists, it relies on the high-dimensional noise vector $\boldsymbol{\epsilon}$. Utilizing such a high-dimensional index for distillation is computationally redundant. We now identify a critical simplification intrinsic to the masking process.

\begin{theorem}[Scalar Trajectory Locking]
Along a deterministic trajectory characterized by a fixed pair $(\mathbf{x}_{0}, \boldsymbol{\epsilon})$, the discrete state transition is mathematically equivalent to a thresholding operation on a fixed scalar uniform variable~$u$.
\end{theorem}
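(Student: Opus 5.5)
The plan is to rewrite the projection condition along the trajectory of Proposition~\ref{proposition:trajectory} in terms of one scalar statistic of $\boldsymbol{\epsilon}$. Dividing the unmasking condition by $\tilde{\sigma}_t>0$ shows that $\mathbf{z}_t=\mathbf{x}_0$ if and only if
\begin{equation*}
Y \triangleq \max_{j\neq k}\epsilon_j-\epsilon_k \;<\; \frac{\tilde{\alpha}_t}{\tilde{\sigma}_t}.
\end{equation*}
This is exactly the random variable $Y$ introduced in the proof of Lemma~\ref{lemma:3.1}. The first observation is that, for a fixed pair $(\mathbf{x}_0,\boldsymbol{\epsilon})$, the entire discrete trajectory $t\mapsto\mathbf{z}_t$ depends on $\boldsymbol{\epsilon}$ only through the scalar margin $Y$. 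The remaining $K-1$ degrees of freedom of the noise never affect the mask/unmask decision.

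Next I would apply the probability integral transform. Since $Y$ has a continuous distribution with strictly increasing CDF $F_Y$ (as used in Lemma~\ref{lemma:3.1}), the variable $u\triangleq F_Y(Y)$ is distributed as $\mathrm{Uniform}(0,1)$. Because $F_Y$ is strictly monotone, $Y<a$ holds if and only if $u<F_Y(a)$. Taking $a=\tilde{\alpha}_t/\tilde{\sigma}_t$ and invoking the calibration $\tilde{\alpha}_t/\tilde{\sigma}_t=F_Y^{-1}(\gamma_t)$ from Lemma~\ref{lemma:3.1}, this becomes $\mathbf{z}_t=\mathbf{x}_0$ if and only if $u<\gamma_t$. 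Otherwise $\mathbf{z}_t=\mathbf{e}_K$. The state is therefore a threshold of a single fixed scalar $u$ against the schedule $\gamma_t$, and $u$ stays constant in $t$ because $\boldsymbol{\epsilon}$ does. This matches the picture in Figure~\ref{fig:mcd_framework}, with masking when $u$ exceeds the threshold. Conversely, sampling $u\sim\mathrm{Uniform}(0,1)$ directly reproduces the same joint law of $\{\mathbf{z}_t\}_t$ as sampling $\boldsymbol{\epsilon}$, which gives equivalence at the level of trajectories and not only of marginals.

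As corollaries I would record two facts. First, the absorbing property: if $\gamma_t$ is nonincreasing in $t$, then $u\ge\gamma_s$ implies $u\ge\gamma_t$ for $s<t$, so masks persist. Second, the coupled pairs $(\mathbf{z}_t,\mathbf{z}_s)$ split into the three regimes $u<\gamma_t$, $\gamma_t\le u<\gamma_s$, and $u\ge\gamma_s$.

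The main obstacle is justifying the continuity and strict monotonicity of $F_Y$, together with the existence of the calibration for every $t$; the proof leans on Lemma~\ref{lemma:3.1} for both. I would verify directly that $Y$ is a difference of a max of $K-1$ Gaussians and an independent Gaussian. Such a $Y$ has a positive density on all of $\mathbb{R}$, so ties occur with probability zero and the boundary cases $\gamma_t\in\{0,1\}$ can be handled as the limits $\tilde{\sigma}_t\to0$ and $\tilde{\alpha}_t\to0$. A minor additional check is that the schedule $\gamma_t$ must be monotone so that the map $t\mapsto\tilde{\alpha}_t/\tilde{\sigma}_t$ is a legitimate VP schedule.
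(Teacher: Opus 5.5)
Your main argument is the paper's proof. You rewrite the unmasking condition as $Y<\tilde{\alpha}_t/\tilde{\sigma}_t$, set $u=F_Y(Y)$, and combine strict monotonicity of $F_Y$ with the calibration of Lemma~\ref{lemma:3.1} to get $\mathbf{z}_t=\mathbf{x}_0 \iff u<\gamma_t$. Two of your additions are correct and go slightly beyond what the paper writes: that $Y$ has an everywhere-positive density, and that the whole path, not only each marginal, is a pushforward of $u\sim\mathcal{U}[0,1]$.

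The one real error is in how you handle the obstacle you flagged. The endpoint $\gamma_t=0$ is \emph{not} the limit $\tilde{\alpha}_t\to 0$. When $\tilde{\alpha}_t=0$, the signal coordinate is just one of $K$ exchangeable Gaussians, so it wins the argmax with probability $F_Y(0)=1/K$. In other words, pure latent noise does not give a fully masked token.

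Consequently, under the usual convention $\tilde{\alpha}_t\ge 0$, the calibration only reaches $\gamma_t\ge 1/K$. Every $\gamma_t<1/K$, including the fully masked prior $\gamma_1=0$ of MDLM, needs $F_Y^{-1}(\gamma_t)<0$, i.e.\ a negative signal coefficient. The limit $\gamma_t\to 0$ corresponds to $(\tilde{\alpha}_t,\tilde{\sigma}_t)\to(-1,0)$.

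Lemma~\ref{lemma:3.1} tacitly allows this, since it puts no sign constraint on $\tilde{\alpha}_t$, so the theorem stands. You should still say so explicitly, or restrict to $\gamma_t\ge 1/K$. Check the endpoints $\gamma_t\in\{0,1\}$ directly at $\tilde{\sigma}_t=0$, where $\mathbf{w}_t=\pm\mathbf{x}_0$, rather than dividing by $\tilde{\sigma}_t$.

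A smaller point: monotonicity of $\gamma_t$ is not needed for the VP parametrization. Any ratio $r$ is realized by $\tilde{\alpha}_t=r/\sqrt{1+r^2}$ and $\tilde{\sigma}_t=1/\sqrt{1+r^2}$. Monotonicity is needed only for the absorbing property in your corollary.
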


\begin{proof}
Along the trajectory, both $\mathbf{x}_{0}$ and $\boldsymbol{\epsilon}$ are constant. Consequently, the noise difference term defined in Lemma~\ref{lemma:3.1}, $Y \triangleq \max_{j \neq k} \epsilon_j - \epsilon_{k}$, is also constant. 
We apply the Probability Integral Transform to define a scalar $u = F_Y(Y)$, which follows a uniform distribution $\mathcal{U}[0, 1]$. 

Recall the unmasking condition from Eq.~\ref{eq:projection}: $\frac{\tilde{\alpha}_t}{\tilde{\sigma}_t} > Y$. Applying the strictly increasing CDF $F_Y$ to both sides, we~get 
\begin{equation}
F_Y\left(\frac{\tilde{\alpha}_t}{\tilde{\sigma}_t}\right) > F_Y(Y).
\end{equation}
By substituting the calibration from Lemma~\ref{lemma:3.1} (where $F_Y(\frac{\tilde{\alpha}_t}{\tilde{\sigma}_t}) = \gamma_t$) and the definition of $u$, this simplifies to:
\begin{equation}
\gamma_t > u.
\end{equation}
This proves that fixing the scalar $u$ fully defines the deterministic path: tokens are unmasked exactly when the signal schedule $\gamma_t$ exceeds the fixed threshold $u$.
\end{proof}

\emph{Discussion.} This theorem is the key to our efficiency. It reveals that the complex condition of preserving the full latent vector $\boldsymbol{\epsilon}$ reduces to a simple scalar comparison. Therefore, we do not need to maintain the high-dimensional noise; a single scalar $u$ is sufficient to uniquely determine and lock the entire trajectory in discrete masked diffusion models.

\paragraph{Operationalizing Coupled Trajectories.}
This theoretical result translates directly into a scalable algorithm for constructing valid student-teacher pairs, as illustrated in Figure~\ref{fig:mcd_framework}. We generalize the single-token result to a sequence $\mathbf{x}_0 = (\mathbf{x}_0^{(1)}, \dots, \mathbf{x}_0^{(L)})$. Instead of sampling high-dimensional Gaussian noise, we simply sample a single shared noise vector $u \sim \mathcal{U}[0, 1]^L$ to lock the trajectory for the entire sequence.

We define the deterministic masking operator $\textsc{Mask}(\cdot)$ element-wise. For any time step $t$ with signal schedule $\gamma_t$, the discrete state $\mathbf{z}_t^{(i)}$ is determined by:
\begin{equation}
\label{eq:mask_op}
\mathbf{z}_t^{(i)} = \textsc{Mask}(\mathbf{x}_0^{(i)}, u^{(i)}, \gamma_t) \triangleq
\begin{cases}
\mathbf{e}_K, & \text{if } u^{(i)} > \gamma_t,  \\
\mathbf{x}_0^{(i)}, & \text{otherwise}.
\end{cases}
\end{equation}
Crucially, Eq.~\ref{eq:mask_op} is mathematically equivalent to the high-dimensional projection (Eq.~\ref{eq:projection}) but operates in a strictly lower-dimensional space. By reducing the noise source from a $K$-dimensional vector to a single scalar for each token without altering the generative dynamics, we provide a tractable and efficient way to construct the coupled trajectories required for consistency distillation.

\subsection{Masked Consistency Distillation (MCD)}
\label{sec:mcd}

With the deterministic coupling mechanism established in Sec.~\ref{sec:locking}, the distillation process becomes algorithmically straightforward (summarized in Algorithm~\ref{alg:mcd}). We calculate the consistency loss solely based on the analytically constructed pairs $(\mathbf{z}_t, \mathbf{z}_s)$, eliminating the need for expensive ODE solvers.

\paragraph{Hybrid Consistency Objective.}
Since the teacher's prediction $p_{\theta^-}(\mathbf{x}_0|\mathbf{z}_s)$ from a noisy state may exhibit high entropy, we apply temperature scaling ($\tau < 1$) to sharpen the target distribution. The student $p_\theta(\mathbf{x}_0|\mathbf{z}_t)$ is then trained to match this target via a hybrid objective.

Let $m_t^{(i)} = \mathbb{I}(u^{(i)} > \gamma_t)$ be the binary mask indicator derived from the shared noise $u$. The total objective combines distillation ($D_{\text{KL}}$) on mutually masked regions with reconstruction ($\mathcal{L}_{\text{CE}}$) on tokens visible only to the teacher:
\begin{equation}
\label{eq:total_loss}
\begin{aligned}
\mathcal{L}_{\text{MCD}} = \sum_{i=1}^L \Big[ & \underbrace{m_s^{(i)} \cdot D_{\text{KL}}(p_\theta(\cdot|\mathbf{z}_t) \parallel p_{\theta^-}(\cdot|\mathbf{z}_s; \tau))}_{\text{Distillation}} \\
+ & \underbrace{(m_t^{(i)} - m_s^{(i)}) \cdot \mathcal{L}_{\text{CE}}(\mathbf{x}_0^{(i)}, p_\theta(\mathbf{x}_0^{(i)}|\mathbf{z}_t))}_{\text{Reconstruction}} \Big].
\end{aligned}
\end{equation}
The rationale behind this hybrid design hinges on the information asymmetry between the two views.
The core of the distillation occurs in the regions masked for both models ($m_s^{(i)}=1$). Since the teacher observes a strictly larger set of visible tokens (specifically those where $m_t^{(i)} - m_s^{(i)} = 1$), its contextual understanding is superior, yielding more accurate and lower-entropy predictions even for the remaining masked tokens. Minimizing $D_{\text{KL}}$ effectively transfers this enhanced contextual reasoning from the teacher to the student.
Meanwhile, the difference term $(m_t^{(i)} - m_s^{(i)})$ serves as a regularization signal. For these tokens, which are visible only to the teacher, we use the ground truth $\mathbf{x}_0$ to provide a stable, zero-variance target. This prevents the student's predictions from drifting, without strictly penalizing it for failing to hallucinate information that is theoretically inaccessible at time $t$. Finally, for tokens visible to both, the loss is naturally zero as no generation is required.


\section{Experiments}
\label{sec:experiments}

We evaluate the effectiveness of Masked Consistency Distillation (MCD). Grounded in the Masked Diffusion Duality framework introduced in Section~\ref{sec:duality}, our approach extends consistency distillation to the discrete domain. We focus on two key questions: (1) Can MCD effectively distill the high-step masked diffusion process into few-step generators while maintaining sample quality? (2) How does MCD compare against leading discrete distillation baselines?

\begin{algorithm}[tb]
\caption{Masked Consistency Distillation (MCD)}
\label{alg:mcd}
\begin{algorithmic}
\STATE \textbf{Input:} Pretrained parameters $\theta_{\text{init}}$, Dataset $\mathcal{D}$, learning rate $\eta$, signal schedule $\gamma(\cdot)$, initial step size $\delta_0$, stages $N$, iterations per stage $M$.

\STATE \textbf{Initialize:} Student $\theta \leftarrow \theta_{\text{init}}$, Teacher $\theta^- \leftarrow \theta$, $\delta \leftarrow \delta_0$

\FOR{$i = 1$ {\bfseries to} $N$} 
    \STATE \textit{// Hard reset: Teacher snapshots Student}
    \STATE $\theta^- \leftarrow \text{stopgrad}(\theta)$ 
    
    \FOR{$j = 1$ {\bfseries to} $M$}
        \STATE Sample data $\mathbf{x}_0 \sim \mathcal{D}$
        
        \STATE Sample shared trajectory noise $u \sim \mathcal{U}[0,1]^L$
        
        \STATE Sample $t \sim \mathcal{U}(\delta, 1]$, set $s \leftarrow t - \delta$
        
        \STATE \textbf{// Coupled trajectory construction}
        \STATE $\mathbf{z}_t \leftarrow \textsc{Mask}(\mathbf{x}_0, u, \gamma_t)$
        \STATE $\mathbf{z}_s \leftarrow \textsc{Mask}(\mathbf{x}_0, u, \gamma_s)$
        
        \STATE \textbf{// Masked Consistency distillation}
        \STATE $\mathbf{p}_{\text{tea}} \leftarrow p_{\theta^-}(\cdot \mid \mathbf{z}_s)$ 
        \STATE $\mathbf{p}_{\text{stu}} \leftarrow p_{\theta}(\cdot \mid \mathbf{z}_t)$
        
        \STATE $\mathcal{L}_{\text{MCD}} \leftarrow 
        \textsc{Loss}(\mathbf{p}_{\text{stu}}, \mathbf{p}_{\text{tea}}; \mathbf{x}_0, \gamma_t, \gamma_s)$
        
        \STATE $\theta \leftarrow \theta - \eta \nabla_\theta \mathcal{L}_{\text{MCD}}$
        \COMMENT{Increase time gap}
    \ENDFOR
    
    \STATE $\delta \leftarrow 2 \cdot \delta$ 
\ENDFOR

\STATE \textbf{return} $\theta$
\end{algorithmic}
\end{algorithm}

\subsection{Experimental Setup}

Following prior work~\cite{sahoo2024simple, deschenaux2024beyond, sahoo2025diffusion}, we conduct experiments on the OpenWebText dataset and employ a Small transformer backbone (169M parameters, $L=12$, $D=768$) based on the Diffusion Transformer (DiT) architecture. 
For the distillation teacher, we utilize a pre-trained MDLM~\cite{sahoo2024simple} checkpoint trained for 1M steps, which serves as the common starting point for both our method and the SDTT baseline.
We primarily compare our method against SDTT~\cite{deschenaux2024beyond}, the current state-of-the-art distillation method for masked diffusion models, using an identical model architecture and evaluation protocol.
Additionally, we provide comparisons with Duo~\cite{sahoo2025diffusion} to benchmark our masked approach against the uniform diffusion domain.

\textbf{Training Details.} We perform distillation for a total of 50,000 steps on 2 NVIDIA A100 GPUs with a global batch size of 128. We use the AdamW optimizer with a learning rate of $6 \times 10^{-5}$ and a 500-step linear warmup. 
For the sharpening temperature $\tau$, we initialize it at $0.96$ in the first round and linearly decrease it by $0.03$ for each subsequent round.
Training is performed in \texttt{bfloat16} precision. To ensure accurate evaluation, we strictly follow the protocol from~\cite{zheng2024masked} and perform all sampling steps in \texttt{float64} precision to avoid rounding errors common in discrete diffusion.

\textbf{Distillation Schedule.} We adopt the staged distillation curriculum from the Discrete Consistency Distillation framework~\cite{sahoo2025diffusion}. The training consists of 5 rounds, with each round lasting 10,000 steps. Following this protocol, we double the time gap $\delta$ between the student and teacher trajectories at the beginning of each round (starting from $\delta=1/512$) and perform a hard update of the teacher parameters at the end of each stage. Crucially, this schedule matches the total training budget of the SDTT baseline~\cite{deschenaux2024beyond}, ensuring a fair comparison.

\definecolor{tea_blue}{RGB}{70, 110, 160}
\definecolor{mcd_r1}{RGB}{255, 215, 165}
\definecolor{mcd_r2}{RGB}{255, 204, 128}
\definecolor{mcd_r3}{RGB}{255, 183, 77}
\definecolor{mcd_r4}{RGB}{245, 124, 0}
\definecolor{mcd_r5}{RGB}{230, 81, 0}

\pgfplotscreateplotcyclelist{mcd_sdtt_final}{
    {tea_blue, mark=*, mark size=1.8pt, thick},    
    {mcd_r1, mark=star, mark size=2.2pt, thick},    
    {mcd_r2, mark=diamond*, mark size=1.8pt, thick},
    {mcd_r3, mark=triangle*, mark size=1.8pt, thick},
    {mcd_r4, mark=square*, mark size=1.6pt, thick}, 
    {mcd_r5, mark=pentagon*, mark size=1.8pt, thick}
}

\begin{figure}[t]
    \centering
    \begin{tikzpicture}
        \begin{semilogxaxis}[
            width=0.98\linewidth,
            height=6.8cm,
            grid=major,
            grid style={dashed, gray!20}, 
            xlabel={Num. sampling steps (NFE)},
            ylabel={Generative Perplexity ($\downarrow$)},
            xmin=4, xmax=1024,   
            ymin=0, ymax=200,    
            ytick={0, 20, 40, 60, 80, 100, 120, 140, 160, 180, 200},
            xtick={8, 16, 32, 64, 128, 256, 512, 1024},
            xticklabels={8, 16, 32, 64, 128, 256, 512, 1024},
            xticklabel style={font=\footnotesize},
            ylabel style={font=\small, yshift=0.1cm},
            xlabel style={font=\small},
            legend pos=north east,
            legend style={
                nodes={scale=0.75, transform shape},
                fill=white,
                fill opacity=0.85,
                draw=gray!20,
                thin,
                cells={anchor=west}
            },
            cycle list name=mcd_sdtt_final
        ]
    
        \addplot coordinates {
            (8, 826.298) (16, 324.415) 
            (32, 163.2703) (64, 105.9561) (128, 81.544) (256, 65.016) (512, 54.742) 
            (1024, 43.7985)
        };
        \addlegendentry{Teacher (MDLM)}
    
        \addplot coordinates {
            (8, 343.352) (16, 156.436) (32, 94.972) (64, 67.288) 
            (128, 55.444) (256, 45.814) (512, 40.186) (1024, 31.634)
        };
        \addlegendentry{MCD (Round 1)}
    
        \addplot coordinates {
            (8, 308.400) (16, 134.491) (32, 79.742) (64, 56.996) 
            (128, 45.390) (256, 37.281) (512, 32.450) (1024, 26.656)
        };
        \addlegendentry{MCD (Round 2)}
    
        \addplot coordinates {
            (8, 236.803) (16, 103.144) (32, 61.821) (64, 45.030) 
            (128, 35.093) (256, 31.067) (512, 28.851) (1024, 23.009)
        };
        \addlegendentry{MCD (Round 3)}
    
        \addplot coordinates {
            (8, 164.311) (16, 80.874) (32, 46.545) (64, 35.017) 
            (128, 30.323) (256, 24.512) (512, 22.287) (1024, 18.682)
        };
        \addlegendentry{MCD (Round 4)}
    
        \addplot coordinates {
            (8, 118.098) (16, 61.351) (32, 39.727) (64, 29.735) 
            (128, 23.124) (256, 21.216) (512, 18.378) (1024, 17.361)
        };
        \addlegendentry{MCD (Round 5)}
    
        \end{semilogxaxis}
    \end{tikzpicture}
    \caption{Performance across Sampling Steps. Generative Perplexity (Gen PPL) for the Teacher model (MDLM, \textcolor{tea_blue}{\textbf{Grey-Blue}}) and successive rounds of Masked Consistency Distillation (MCD, \textcolor{mcd_r4}{\textbf{Orange}}). By zooming in on the PPL $\le 200$ regime, we highlight that MCD matches the teacher's performance with significant speedup.}
    \label{fig:gen_ppl}
\end{figure}

\begin{table}[t]
\centering
\caption{\textbf{Evolution of Generation Quality (Gen PPL $\downarrow$).} Comparison between Duo, SDTT, and our method (MCD) across 5 distillation rounds. Note that \textbf{SDTT and MCD share the same teacher (MDLM)}, while Duo utilizes a separate teacher checkpoint (Duo-T). We report the corresponding teacher baselines in the group headers. \textbf{Bold} indicates the best performance in each column.}
\label{tab:distillation_evolution}
\renewcommand{\arraystretch}{1.1}
\setlength{\tabcolsep}{3.2pt}
\begin{small}
\begin{sc}
\begin{tabular}{lccccc}
\toprule
& \multicolumn{5}{c}{\textbf{Distillation Round}} \\
\cmidrule(lr){2-6}
\textbf{Setting / Method} & \textbf{R1} & \textbf{R2} & \textbf{R3} & \textbf{R4} & \textbf{R5} \\
\midrule

\multicolumn{6}{l}{\textit{32 Steps} \quad \scriptsize Teacher: MDLM (SDTT/MCD) 163.27 \textbar\ Duo-T (Duo) 96.46} \\
\hspace{3mm} Duo & \textbf{82.03} & \textbf{71.64} & 66.37 & 62.53 & 60.98 \\
\hspace{3mm} SDTT & 119.35 & 90.74 & 76.03 & 63.88 & 52.16 \\
\hspace{3mm} MCD (Ours) & 94.97 & 79.74 & \textbf{61.82} & \textbf{46.55} & \textbf{39.73} \\
\addlinespace[0.4em]

\multicolumn{6}{l}{\textit{64 Steps} \quad \scriptsize Teacher: MDLM (SDTT/MCD) 105.96 \textbar\ Duo-T (Duo) 85.64} \\
\hspace{3mm} Duo & 73.48 & 66.18 & 61.27 & 56.64 & 55.75 \\
\hspace{3mm} SDTT & 81.64 & 65.89 & 52.17 & 47.40 & 40.56 \\
\hspace{3mm} MCD (Ours) & \textbf{67.79} & \textbf{57.00} & \textbf{45.03} & \textbf{35.02} & \textbf{29.74} \\
\addlinespace[0.4em]

\multicolumn{6}{l}{\textit{128 Steps} \quad \scriptsize Teacher: MDLM (SDTT/MCD) 81.54 \textbar\ Duo-T (Duo) 79.86} \\
\hspace{3mm} Duo & 71.05 & 62.45 & 58.67 & 54.96 & 54.14 \\
\hspace{3mm} SDTT & 65.58 & 53.76 & 42.41 & 38.61 & 32.93 \\
\hspace{3mm} MCD (Ours) & \textbf{55.44} & \textbf{45.39} & \textbf{35.09} & \textbf{30.32} & \textbf{23.12} \\
\addlinespace[0.4em]

\multicolumn{6}{l}{\textit{256 Steps} \quad \scriptsize Teacher: MDLM (SDTT/MCD) 65.02 \textbar\ Duo-T (Duo) 78.53} \\
\hspace{3mm} Duo & 68.36 & 59.25 & 56.67 & 54.40 & 53.69 \\
\hspace{3mm} SDTT & 53.09 & 43.08 & 35.12 & 33.32 & 27.31 \\
\hspace{3mm} MCD (Ours) & \textbf{45.81} & \textbf{37.29} & \textbf{31.07} & \textbf{24.51} & \textbf{21.22} \\
\addlinespace[0.4em]

\multicolumn{6}{l}{\textit{512 Steps} \quad \scriptsize Teacher: MDLM (SDTT/MCD) 54.74 \textbar\ Duo-T (Duo) 78.16} \\
\hspace{3mm} Duo & 67.94 & 61.07 & 56.27 & 53.54 & 52.15 \\
\hspace{3mm} SDTT & 43.57 & 37.72 & 32.94 & 30.98 & 25.60 \\
\hspace{3mm} MCD (Ours) & \textbf{40.19} & \textbf{32.45} & \textbf{28.85} & \textbf{22.29} & \textbf{18.38} \\

\bottomrule
\end{tabular}
\end{sc}
\end{small}
\end{table}

\subsection{Sample Quality and Efficiency}

We evaluate generation quality using Generative Perplexity (Gen PPL), computed by a pre-trained GPT-2 Large~\cite{radford2019language} model on the generated samples. Lower PPL indicates better sample quality.

\textbf{Superior Performance in Few-Step Generation.} As shown in Figure~\ref{fig:gen_ppl} and Table~\ref{tab:distillation_evolution}, MCD consistently outperforms the SDTT baseline across all sampling steps ($N \in \{32, \dots, 512\}$). 
Consistent with our premise, the distilled results confirm that masked diffusion maintains a significant performance advantage over the uniform diffusion framework employed by Duo~\cite{sahoo2025diffusion}.
Notably, at 32 sampling steps, MCD achieves a Gen PPL of 39.73, which not only surpasses the SDTT baseline (52.16) but also outperforms the original Teacher model sampled with 512 steps (54.74). This shows that MCD accelerates the inference by \textbf{16$\times$} while simultaneously improving the quality of the teacher's generation.

\textbf{Robustness at Extremely Low NFEs.} Generating coherent text at extremely low step counts ($N=8$) is notoriously difficult for discrete diffusion models due to the rapid accumulation of errors. MCD overcomes this limitation and maintains structural coherence. As shown in Figure~\ref{fig:gen_ppl}, MCD achieves a Gen PPL of 118.1 even at 8 steps. This robustness stems from our duality framework: by identifying the underlying latent Gaussian process, MCD recovers a deterministic trajectory that places both the student state $z_t$ and the teacher state $z_s$ on the same consistent path, ensuring coherence during few-step generation.

\textbf{Convergence Efficiency.} The results also highlight the superior convergence of our method. By distillation Round 4, MCD already surpasses the final performance of SDTT (Round 5) at most step counts. This efficiency is rooted in the theoretical framework of Masked Diffusion Duality. Unlike SDTT, which assumes no deterministic mapping exists for discrete diffusion and thus distills against high-variance stochastic targets~\cite{deschenaux2024beyond}, MCD identifies the underlying deterministic latent trajectory (Section~\ref{sec:locking}). This provides a stable, consistent regression target for the student, significantly accelerating training convergence compared to distilling stochastic jumps.

\subsection{Ablation Study}

We conduct an ablation study to validate the effectiveness of the loss function design in our framework. We report the results after the first round of distillation, evaluating Generative Perplexity (Gen PPL) at 32 sampling steps on OpenWebText. Results are summarized in Table~\ref{tab:ablation}.

\textbf{Loss Function Analysis.} We compare our proposed Hybrid Consistency Objective against two standard distillation objectives: Forward KL Divergence (KL-Fwd) and Backward KL Divergence (KL-Bwd).
First, regarding KL-Bwd, we found that using the backward KL divergence (mode-seeking) within our proposed framework led to severe optimization instability. As noted in Table~\ref{tab:ablation}, the loss values exploded, causing the distillation process to fail completely.
Second, for KL-Fwd, while optimization was stable, it resulted in suboptimal generation quality (PPL 110.86) compared to our method.
In contrast, the Hybrid Objective (Ours) combines the distributional supervision of KL divergence with the hard reconstruction signal of Cross-Entropy ($\mathcal{L}_{\text{CE}}$). This approach achieves the best performance (PPL 94.97), effectively integrating the teacher's soft probability estimates with hard ground-truth supervision.



\subsection{Scalability Analysis}
\label{sec:scalability}

To investigate the scalability of our approach, we applied MCD to a larger backbone, scaling the model size from 169M to 863M parameters (DiT-Large). 
Following the experimental protocol established by SDTT~\cite{deschenaux2024beyond}, we train the 863M parameter teacher model for 400K steps and using Llama3 8B~\cite{touvron2023llama} for evaluating. This ensures a strictly fair comparison with the strongest baselines under identical training budgets. 

As shown in Table~\ref{tab:scalability}, MCD remains effective in this large-scale setting. 
After just one round of distillation, MCD outperforms the SDTT baseline. For instance, at 64 sampling steps, MCD achieves a perplexity of 121.01 compared to 136.71 for SDTT, representing an 11.5\% relative improvement. 
Furthermore, the student model significantly outperforms the teacher across all sampling steps. 
Notably, at 32 steps, MCD reduces the perplexity from 260.17 to 204.10, and at 512 steps, it improves from 62.92 to 48.54. 
These results confirm that MCD scales effectively to larger architectures and consistently surpasses the teacher's generation quality.

\begin{table}[t]
\centering
\caption{Ablation of Loss Function. We report the results after the first round of distillation, evaluating Generative Perplexity (Gen PPL) at 32 sampling steps on OpenWebText. ``FAILED'' indicates training failure due to exploding loss values.}
\label{tab:ablation}
\begin{small}
\begin{tabular}{lc}
\toprule
\textbf{Setting} & \textbf{Gen PPL (32 steps)} \\
\midrule
\multicolumn{2}{l}{\textit{Loss Function Choices}} \\
\textbf{Hybrid Objective (Ours)} & \textbf{94.97} \\
KL-Fwd Objective & 110.86 \\
KL-Bwd Objective & \textsc{Failed} \\ 
\bottomrule
\vspace{-15pt}
\end{tabular}
\end{small}
\end{table}

\begin{table}[t]
\centering
\setlength{\tabcolsep}{4.0pt}
\caption{Comparison of distillation efficiency at Round 1 on DiT-Large (863M). Note that MCD achieves significant perplexity reduction ($\sim$21\%) immediately in the first round(R1). The absolute PPL values reflect the use of Llama 3 8B as the evaluator on the standard 400k-step teacher checkpoint, consistent with the protocol in SDTT.}
\label{tab:scalability}
\begin{small}
\begin{tabular}{lcccc}
\toprule
\textbf{Steps} & \textbf{Teacher} & \textbf{SDTT (R1)} & \textbf{MCD (R1)} & \textbf{Imp.} \\
\midrule
32  & 260.17 & 228.06 & \textbf{204.10} & 21.55\% \\
64  & 155.23 & 136.71 & \textbf{121.01} & 22.04\% \\
128 & 100.49 & 88.86  & \textbf{77.31}  & 23.07\% \\
256 & 75.68  & 67.43  & \textbf{58.06}  & 23.28\% \\
512 & 62.92  & 55.57  & \textbf{48.54}  & 22.85\% \\
\bottomrule
\vspace{-15pt}
\end{tabular}
\end{small}
\end{table}

\section{Related Work}
\label{sec:related_work}

\textbf{Diffusion Language Models}
Discrete diffusion models~\cite{austin2021structured} operate directly on categorical data, avoiding the quantization errors of continuous embeddings. They are broadly categorized into two categories, \textit{uniform} diffusion\cite{lou2023discrete} and \textit{masked} diffusion (MDLM)~\cite{sahoo2024simple, shi2024simplified, ou2024your}.
In contrast, Gaussian diffusion~\cite{li2022diffusion, han2023ssd, gulrajani2023likelihood} is used for language modeling by injecting noise into the continuous embeddings of discrete tokens.

\textbf{Acceleration and Distillation.} 
Distillation techniques in Gaussian diffusion models~\cite{luhman2021knowledge, salimans2022progressive, song2023consistency} rely on deterministic PF-ODE trajectories, which are unavailable for discrete diffusion.
However, recent work has begun to explore acceleration methods for discrete diffusion models.
SDTT~\cite{deschenaux2024beyond} pioneered self-distillation for masked models achieving significant acceleration, but relies on stochastic approximations assuming no deterministic mapping exists.
Duo~\cite{sahoo2025diffusion} successfully bridges the gap for \textit{uniform} diffusion, enabling consistency distillation, yet remains limited to the uniform noise schedule which typically underperforms masked modeling.
Concurrent to our work, CD\textsuperscript{4}LM~\cite{liang2026cd4lm} employs a similar subset masking strategy and demonstrates strong scalability through large-scale experiments. While their empirical success serves as a strong validation for the effectiveness of this coupling approach, they justify it primarily via Rao-Blackwellization for variance reduction.
In contrast, we prove that the discrete process is a deterministic projection of a latent Gaussian trajectory, providing the rigorous theoretical foundation that explains \textit{why} such scalar trajectory locking works strictly.





\section{Conclusion}

In this work, we first established the Masked Diffusion Duality, proving that the seemingly stochastic masking process arises as a deterministic projection of a latent Gaussian flow. 
This theoretical breakthrough led to the discovery of Scalar Trajectory Locking, which simplifies the complex high-dimensional latent coupling into a tractable scalar thresholding operation. 
Building on this foundation, we introduced Masked Consistency Distillation, a principled framework that leverages these insights to analytically construct deterministic trajectory pairs.
Empirical results demonstrate that MCD significantly outperforms state-of-the-art baselines, including SDTT and Duo, achieving a 16$\times$ speedup while surpassing the teacher's generation quality. Furthermore, our approach exhibits strong scalability to large-scale backbones. 
We hope this work provides a solid theoretical foundation for future research.

\nocite{langley00}

\bibliographystyle{icml2026}

\bibliography{example_paper}




\end{document}

%% file: fig_duality.tex
\begin{figure}[t]
\centering
\definecolor{mainBlue}{RGB}{0, 114, 178}
\definecolor{mainOrange}{RGB}{213, 94, 0}
\definecolor{textGray}{RGB}{80, 80, 80}

\resizebox{\columnwidth}{!}{%
\begin{tikzpicture}[
    font=\sffamily,
    >=stealth,
    thick,
    data_node/.style={
        rectangle, rounded corners=3pt,
        minimum width=1.6cm, minimum height=1.0cm,
        draw=black!80, line width=1.2pt,
        font=\Large\bfseries, fill=white,
        inner sep=3pt
    },
    path_label/.style={
        font=\small\bfseries, text=textGray, align=center
    },
    proj_box/.style={
        rectangle, rounded corners=4pt,
        draw=black!15, fill=white, line width=0.8pt,
        align=left, inner sep=5pt,
        drop shadow={opacity=0.08, shadow xshift=1pt, shadow yshift=-1pt}
    }
]


\node[data_node] (x0) at (0, 0) {$\mathbf{x}_0$};
\node[below=0.15cm of x0, font=\small, color=textGray] {Clean Data};

\node[data_node, draw=mainOrange, text=mainOrange, dashed] (wt) at (6.0, 0) {$\mathbf{w}_t$};
\node[below=0.15cm of wt, font=\small, color=mainOrange] {Latent Gaussian};

\node[data_node, draw=mainBlue, text=mainBlue, fill=mainBlue!5] (zt) at (6.0, 4.2) {$\mathbf{z}_t$};
\node[right=0.15cm of zt, align=left, font=\small, color=mainBlue] {Masked State\\(Observation)};


\draw[->, mainBlue, line width=1.5pt] (x0.north) to[out=85, in=180] 
    node[pos=0.35, path_label, text=mainBlue, above=18pt] {Discrete Masking\\$q(\mathbf{z}_t|\mathbf{x}_0)$} 
    (zt.west);

\draw[->, mainOrange, dashed, line width=1.5pt] (x0.east) -- 
    node[midway, path_label, text=mainOrange, below=5pt] {Gaussian Diffusion\\$q(\mathbf{w}_t|\mathbf{x}_0)$} 
    (wt.west);

\draw[->, black!80, line width=1.5pt] (wt.north) -- (zt.south);


\node[proj_box] at ($(wt)!0.5!(zt)$) (proj_module) {
    \textbf{Projection} $\mathcal{P}(\mathbf{w}_t)$\\[4pt]
    \scriptsize 
    \renewcommand{\arraystretch}{1.5} 
    ${\mathbf{z}_t} = \begin{cases} 
        \mathbf{x}_0 & \text{if } {\mathbf{w}_t^{(k)}} > \max\limits_{j \neq k} {\mathbf{w}_t^{(j)}} \\
        \mathbf{e}_K^{\texttt{[M]}} & \text{otherwise}
    \end{cases}$
};

\node[font=\bfseries, color=black!40, align=center] at (2.8, 2.0) {Structural\\Duality\\[2pt]\Huge $\equiv$};

\end{tikzpicture}
}
\caption{An illustration of the Masked Diffusion Duality. 
Top: The discrete masking process (blue) directly samples $\mathbf{z}_t$ from $\mathbf{x}_0$. 
Bottom: The underlying continuous latent Gaussian process (orange). 
The projection operator $\mathcal{P}$ maps the continuous latents $\mathbf{w}_t \in \mathbb{R}^K$ to the discrete observations $\mathbf{z}_t$ by thresholding the signal strength against the maximum relative noise. This establishes a deterministic link between the marginal distributions of the two~processes.}
\label{fig:duality_eq5}
\end{figure}